\documentclass[letterpaper, 10 pt, conference]{ieeeconf}  
\IEEEoverridecommandlockouts   
% \overrideIEEEmargins    

\usepackage{amsfonts}
\usepackage{amsthm}
\makeatletter
\def\th@plain{%
  \thm@notefont{}% same as heading font
  \itshape % body font
}

\makeatother
\usepackage{mathtools}      % Loads amsmath
\usepackage{amssymb}

\usepackage[linesnumbered,algoruled,boxed,vlined, noend]{algorithm2e}
\usepackage{dsfont}
\usepackage{comment}
\usepackage{xcolor}
\usepackage{graphicx}
\usepackage{cite}
\usepackage[font=small,labelfont=bf]{caption}
\usepackage{subcaption}
\usepackage{diagbox}
\usepackage{algorithmic}
\usepackage{wrapfig}

\usepackage{enumitem}% http://ctan.org/pkg/enumitem

\usepackage{overpic}        % Add text on figures

\usepackage[textsize=footnotesize]{todonotes}
\usepackage{xcolor}
\usepackage{xargs} 
\newcommandx{\kg}[2][1=]{\todo[linecolor=red,
			backgroundcolor=red!10,bordercolor=red,#1]{#2}}
\newcommandx{\jy}[2][1=]{\todo[linecolor=green,
			backgroundcolor=green!10,bordercolor=green,#1]{JY: #2}}
\newcommandx{\sw}[2][1=]{\todo[linecolor=blue,
			backgroundcolor=blue!10,bordercolor=blue,#1]{SW: #2}}

\usepackage[normalem]{ulem}

%%%%%%%%%%%%%%%%%%%%%%%%%%%%%%%%%%%%%%%%%%%%%%%%%%%%%%%%%%%%%%%%
% Environments and Theorems
%%%%%%%%%%%%%%%%%%%%%%%%%%%%%%%%%%%%%%%%%%%%%%%%%%%%%%%%%%%%%%%%
\newtheoremstyle{mystyle}%                % Name
  {}%                                     % Space above
  {}%                                     % Space below
  {\itshape}%                                     % Body font
  {}%                                     % Indent amount
  {\bfseries}%                            % Theorem head font
  {.}%                                    % Punctuation after theorem head
  { }%                                    % Space after theorem head, ' ', or \newline
  {}%                                     % Theorem head spec (can be left empty, meaning `normal')

\theoremstyle{mystyle}

\newtheorem{theorem}{Theorem}[section]
\newtheorem{proposition}{Proposition}[section]

\theoremstyle{definition}

\theoremstyle{remark}

\setlength{\parskip}{1mm}
%%%%%%%%%%%%%%%%%%%%%%%%%%%%%%%%%%%%%%%%%%%%%%%%%%%%%%%%%%%%%%%%
% Definitions 
%%%%%%%%%%%%%%%%%%%%%%%%%%%%%%%%%%%%%%%%%%%%%%%%%%%%%%%%%%%%%%%%

% Spacing for table
%\def\arraystretch{1.1}
%\setlength\tabcolsep{1mm}
% Space between figure and caption
%\setlength{\abovecaptionskip}{1pt}
%\setlength{\belowcaptionskip}{1pt}
% Space between text and figs
%\setlength{\dbltextfloatsep}{1pt plus .5pt minus .5pt}
%\setlength{\textfloatsep}{1pt plus .5pt minus .5pt}
%\setlength{\intextsep}{1pt plus .5pt minus .5pt}
% Space between equations and text
%\setlength{\belowdisplayskip}{2pt} \setlength{\belowdisplayshortskip}{2pt}
%\setlength{\abovedisplayskip}{2pt} \setlength{\abovedisplayshortskip}{2pt}

%%%%%%%%%%%%%%%%%%%%%%%%%%%%%%%%%%%%%%%%%%%%%%%%%%%%%%%%%%%%%%%%
% Algorithm2e setup
%%%%%%%%%%%%%%%%%%%%%%%%%%%%%%%%%%%%%%%%%%%%%%%%%%%%%%%%%%%%%%%%
\SetKwProg{Fn}{Function}{}{}
\SetKwComment{Comment}{$\triangleright$\ }{}

\title{
Barrier Forming: Separating Polygonal Sets with Minimum Number of Lines 
}

% \author{Albert Author$^{1}$ and Bernard D. Researcher$^{2}$% <-this % stops a space
% \thanks{*This work was not supported by any organization}% <-this % stops a space
% \thanks{$^{1}$Albert Author is with Faculty of Electrical Engineering, Mathematics and Computer Science,
%         University of Twente, 7500 AE Enschede, The Netherlands
%         {\tt\small albert.author@papercept.net}}%
% \thanks{$^{2}$Bernard D. Researcheris with the Department of Electrical Engineering, Wright State University,
%         Dayton, OH 45435, USA
%         {\tt\small b.d.researcher@ieee.org}}%
% }

\author{ Si Wei Feng \and Jingjin Yu
\thanks{ S. W. Feng, and J. Yu are with the Department of Computer Science, Rutgers, the State University of New Jersey, Piscataway, NJ, 
 USA. E-Mails: \{{\tt siwei.feng,  jingjin.yu}\}\hspace*{.25em}
 @ \hspace*{.25em}rutgers.edu.
}
\thanks{
This work is supported by NSF awards IIS-1845888 and IIS-2132972.
}
}

\begin{document}
\maketitle
\thispagestyle{empty}
\pagestyle{empty}

%%%%%%%%%%%%%%%%%%%%%%%%%%%%%%%%%%%%%%%%%%%%%%%%%%%%%%%%%%%%%%%%%%%%%%%%%%%%%%%%
\begin{abstract}
%\vspace{85mm}
% We carry out a structural and algorithmic study of a mobile sensor coverage optimization problem targeting 2D surfaces embedded in a 3D workspace. The investigated settings model multiple important applications including camera network deployment for surveillance, geological monitoring/survey of 3D terrains, and UVC-based surface disinfection for the prevention of the spread of disease agents (e.g., SARS-CoV-2). Under a unified general ``sensor coverage'' problem, three concrete formulations are examined, focusing on optimizing visibility, single-best coverage quality, and cumulative quality, respectively. After demonstrating the computational intractability of all these formulations, we describe approximation schemes and mathematical programming models for near-optimally solving them. The effectiveness of our methods is thoroughly evaluated under realistic and practical scenarios. 
In this work, we carry out structural and algorithmic studies of a problem of barrier forming: selecting the minimum number of straight line segments (barriers) that separate several sets of mutually disjoint objects in the plane. 
The problem models the optimal placement of line sensors (e.g., \\
\noindent infrared % will compile one more line if not doing this
laser beams) for isolating many types of regions in a \\
\noindent pair-wise manner for practical purposes (e.g., guarding against intrusions). 
The problem is NP-hard even if we want to find the minimum number of lines to separate two sets of points in the plane. 
Under the umbrella problem of barrier forming with minimum number of line segments, 
three settings are examined: barrier forming for point sets, point sets with polygonal obstacles, polygonal sets with polygonal obstacles.
We describe methods for computing the optimal solution for the first two settings with the assistance of mathematical programming, and provide a $2$-OPT solution for the third.
We demonstrate the effectiveness of our methods through extensive simulations.
% Barrier construction 
\end{abstract}

%\vspace{-2mm}
\section{Introduction}

Consider the scenario where one or more rogue agents (e.g., criminals) may be hiding in several isolated regions in a 2D workspace. To prevent them from potentially escaping from these regions to other nearby vulnerable regions, we may wish to set up line-of-sight sensors to detect if rogue agents attempt to escape. For the setup, a natural question one may ask is: what is the minimum number of line segments that are needed to form the desired barrier? The same setting finds many other practical applications, for example, for the identical setting, we may use the deployed sensors to track the movement of agents between different set of regions, e.g., understanding the flow of people between residential areas to commercial areas, which can benefit large-scale decision making, e.g., to help properly allocating resources for improving the city infrastructure. 
Alternatively, the computed line segments can serve as patrolling routes for autonomous agents (robots or humans) for actively monitoring intrusions, where the agents can always keep tracking events along the segments for which they are responsible.

%Barrier forming \cite{}, i.e. separating multiple sets of objects from each other,
%where each set may contain multiple scattered objects mix among other object sets, finds many real-world applications,
%e.g., erecting security fences around buildings,
%isolating different groups of agents (e.g., people, animals), 
%designing routes for patrolling robots to detect invaders, and so on. 
%multi-agent search for rouge agents, and so on. 
%These applications shares the common objective of better coverage or sensing of the environment with guards. 
%In reality, the types of barriers varies from barriers with range sensibility like lidar, visibility-based barriers like cameras or guards, to physical straight line fences or walls.
%
Motivated by the above stated scenarios and inspired by earlier research in robotics on barrier forming \cite{kloder2007barrier,kloder2008partial}, i.e., erecting barriers for separating regions of interest, in this paper, we examine the variation of finding the minimum number of straight line segments for isolating multiple sets of points of polygons in a two-dimensional workspace. Fig~\ref{fig:ex}[left] illustrates an instance where three colored polygonal sets are to be separated from each other and the grey polygons are obstacles. Fig~\ref{fig:ex}[right] shows a possible solution which is fairly non-trivial 
to obtain. 
%
%The problem finds applications in many practical scenarios where geometric separation is needed, for example, the deployment of a minimum set of  line-of-sight security beams to guard critical areas from intrusions. 
%dividing different rooms in a floor or buildings from each other to prevent the %spread of virus or contagious diseases,
%designing the surveillance routes for guards that walk on a straight route,
%building fences for regions among different security level that are not allowed 
%to have connections, and so on. 

\begin{figure}[t]
    \centering
    \includegraphics[width=\columnwidth]{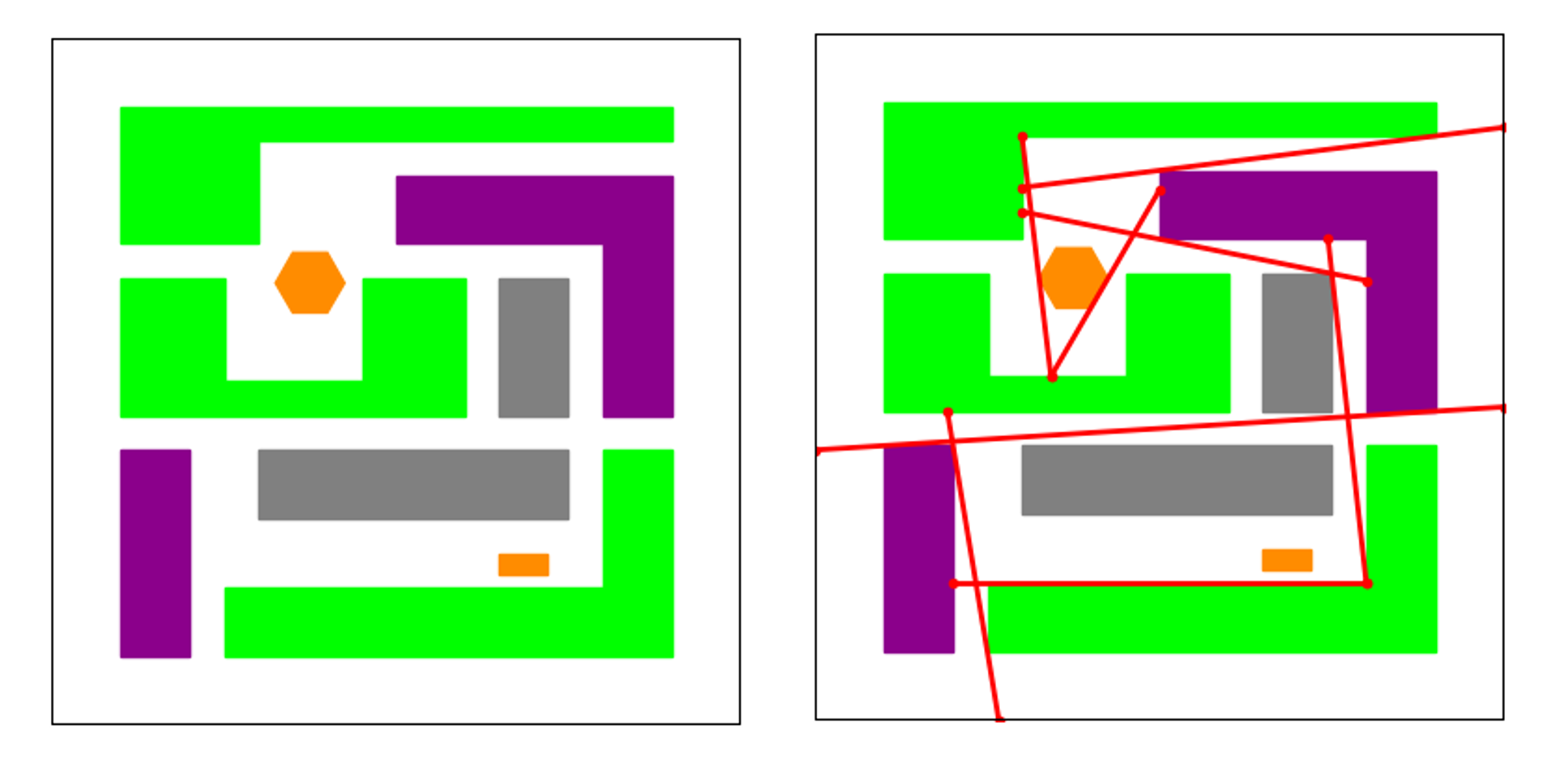}
    \caption{Example of barrier forming for separating three sets of complex polygonal shapes. Different colors represent different sets with the grey ones representing obstacles. The red line segments are the barriers computed by our algorithm.}
    \label{fig:ex}
    \vspace{-0.2in}
\end{figure}
As a summary of this work and its contributions, we study three settings in using line segments to separate sets of disjoint geometric shapes in a 2D workspace: (1) separating point sets, (2) separating point sets among polygonal obstacles, and (3) separating polygonal sets among polygonal obstacles. 
Whereas all three settings are NP-hard to optimally solve, we derive an effective method for computing optimal solutions for the first two settings, capable of handling tens of objects (points and/or polygons) partitioned into multiple sets. 
The  method first systematically computes candidate barrier set containing a minimum separating barrier, and then builds a novel integer programming model for finding the minimum barrier. 
Following a similar approach, we also develop a method that computes solutions for the third setting that is proven to be at least $2$-optimal. 
We provide theoretical analysis that shed some light on why the setting involving polygonal sets is more difficult to solve.
Extensive simulation study corroborates the effectiveness of our algorithms. 

%Our contribution to barrier forming is, in summary, formulating the problem of minimizing
%the line segment barriers used with respect to different kinds of object sets (point object and polygonal object). 
%For barrier forming among sets of point objects, we provide effective integer programming based method to obtain the optimal solution.
%For the general polygonal objects, our method can provide a guaranteed 2-OPT solution.
%We also perform evaluations on four different settings to show
%the effectiveness of the proposed method.

\noindent\textbf{Related Work}.
Our investigation of barrier forming has its origins from several research areas.
In the study of pursuit evasion or more generally differential games \cite{ho1965differential,isaacs1999differential,hajek2008pursuit,tovar2009sensor,simov2000pursuit,guibas1997visibility,kameda2006online,kirousis1986searching, wen2018localization, sachs2004visibility,lau2005optimal, yu2011shadow}, 
% flash light searcher, 1-2 inf searcher
such scenarios often happen where several agents are tasked to search an environment for hidden rogue agents or to protect critical regions from outsider invaders, which amount to creating and maintaining static or dynamic barriers of some form.  
Among the approaches taken in finding solutions for these problems, some resort to discretize the environment into graph representations and conduct search over them \cite{kirousis1986searching, sachs2004visibility}, while others adopt probabilistic reasoning \cite{lau2005optimal, yu2011shadow}. 
In particular, Tovar et al. \cite{tovar2009sensor} studied a problem that examines how to untangle sensor beam crossings to reason about the state of a robot, and use the insight to build algorithms for driving a robot to a desired state. In a sense, their study can be viewed as studying a problem that is a dual of the problem that we study here. 

%\textcolor{red}{JJ's edit location}

Barrier forming problems can also be seen as a type of sensor coverage problem. 
A series of study of perimeter guarding problems aim at covering the boundaries of some regions to protect them from the outside \cite{FenHanGaoYuRSS19, FenYuRSS20}, which share similar motivation. 
The barrier forming problem has more flexible solutions by not fixing the specific boundaries to secure for the critical regions,
which can be more general and closer to reality.
In \cite{kloder2007barrier}, the authors solved a barrier coverage problem, the objective of which is to minimize the total length of the barrier among two groups of polygonal objects, optimally under non-trivial environment. To tackle the proposed problem, a novel and efficient network-flow based method is applied. 
In \cite{abrahamsen2020geometric}, the work is extended to multiple groups of objects.
Their following work \cite{kloder2008thesis} includes a similar problem formation to the problem studied in this paper, where the objective is minimizing the number of fixed-length line segments used. 
However, the proposed solution method in \cite{kloder2008thesis} uses Tarski sentence \cite{tarski1949decision} which, to our knowledge, does not have very effective method to solve.

This work is closely related to several problems in computational geometry, especially point set shattering which seeks a complete separation among a single set of points \cite{freimer1991complexity, har2020separating}. 
Bichromatic point sets or polyherdral separation problem uses wedges, axis-aligned lines, chords, parallel lines, or circles to separate two sets of points or polyhedra \cite{devillers2001separating,armaselu2017geometric,boissonnat2001circular, demaine2005separating}. 
Work on these problems often put on specific constraints such as working with convex objects or simple polygons without holes, adopting non-crossing or parallel lines as the separator, and so on. 

\noindent \textbf{Paper Structure}.
The rest of the paper starts with formulating the barrier formation problem and introducing the three variants studied in Section~\ref{sec:preliminary}. 
Then, we describe the structure analysis of the problem in Section~\ref{sec:structure}, which will base the algorithm
proposed in Section~\ref{sec:algorithm}. Lastly, in Section~\ref{sec:evaluation} we evaluate the algorithm on four different scenarios. %\r{associate each with a specific section.}

%  methods take
% The problem of constructing a set of barriers appears in various robotics applications 

% Barrier forming is studied in various domains.
% In the pursuit evasion game, barriers were constructed to perform 
% area scan, moving target persuasion \cite{yu2011shadow, tovar2009sensor} etc.
% In computational geometry,
% barrier forming has relevance to several problem: 
% Regarding separating polygons, 
% the study of barrier coverage problem introduces a network flow-based algorithm 
% for computing the minimum total distance separation for two sets of polygons 
% with the existence of obstacles\cite{kloder2007barrier, kloder2008thesis}.
% In \cite{kloder2008thesis}, the author explored barrier forming with minimum number of 
% fixed-length line segments, while it is still important to find high performance
% solution for such problem.\label{sec:intro}

\section{Preliminaries}\label{sec:preliminary}
\subsection{Barrier Forming with Minimum Number of Line Segments}

\begin{figure}[ht]
    \centering
    \vspace{.05in}
    \includegraphics[width = .35\textwidth]{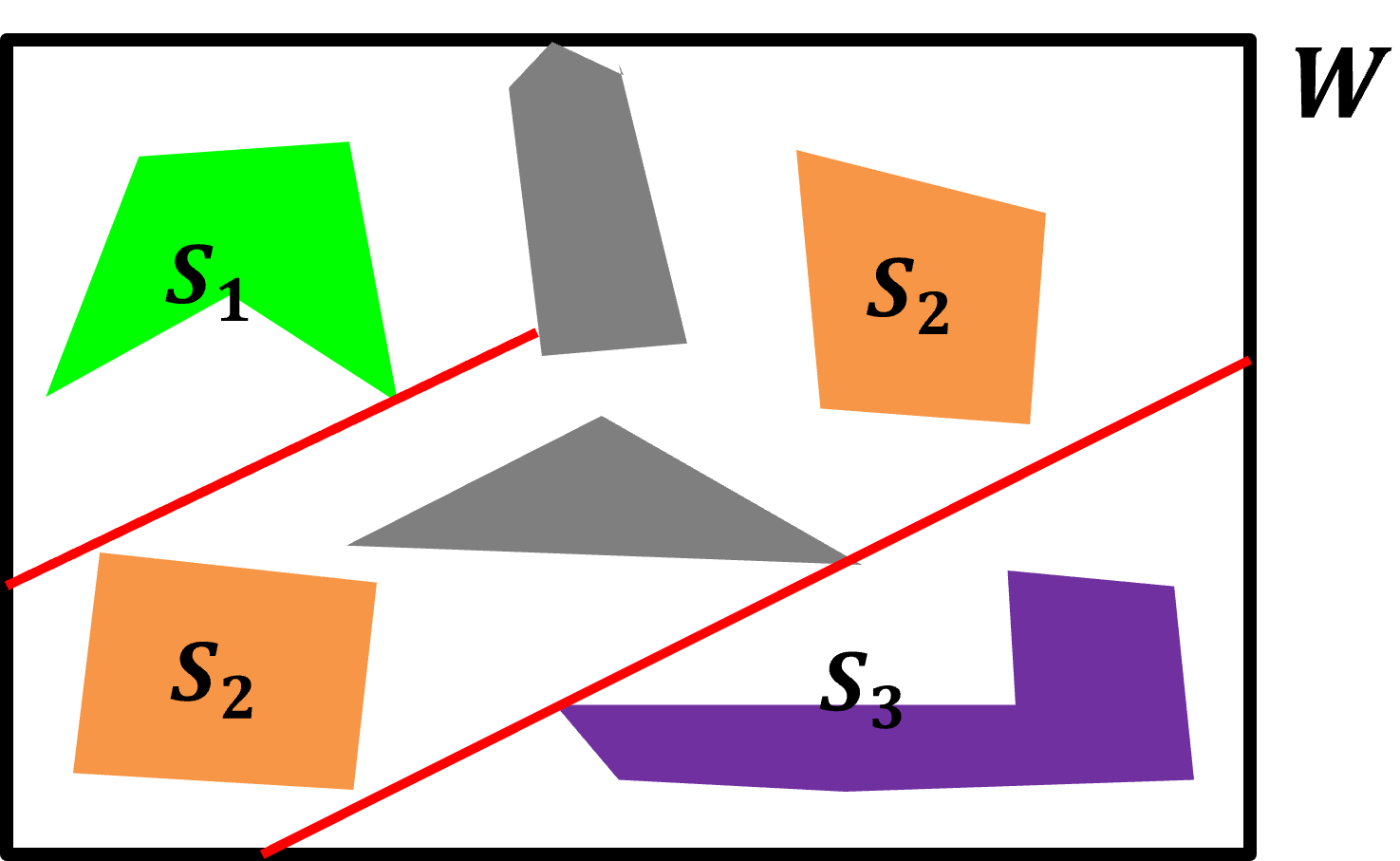}
    \vspace{.05in}
    \caption{Illustration of a barrier forming problem for the three sets of (colored) polygonal objects $S_{1\sim3}$ among (gray) obstacles. For the setting, $2$ straight lines are sufficient to cut off path connections between any pair of polygon regions from two different sets. In this work, our general goal is to build such barriers with the least number of line segments.}
    \label{fig:my_label}
\end{figure}

Let $\mathcal{W}$ be a simply connected polygonal workspace in $\mathbb R^2$. Consider $k$ sets of objects $S_1, \dots, S_k$ in $\mathcal W$, as well as a set of polygonal obstacles $\mathcal O$. 
We seek to use a set of straight line segments $L$ to separate $S_1, \dots, S_k$ from each other, such that for any two objects $o_1 \in S_i$ and $o_2 \in S_j, i \ne j$, any path between $o_1$ and $o_2$ in the free space of $\mathcal W$ must be ``cut off'' by one or more line segments from $L$. 
The line segments do not have length limit and can cross each other, but they cannot cross objects or obstacles in the workspace. We want to find the minimum number of line segments to complete the separation task. 

Under the general formulation of Barrier Forming, we examine three different variants with increasing difficulty:
\begin{enumerate} 
\item \emph{Barriers for point sets}, in which  $S_1, \dots, S_k$ as well as $O$ are sets of points. 
\item \emph{Barriers for point sets among polygonal obstacles}, in which  $S_1, \dots, S_k$ are sets of points but $O$ is a set of polygonal objects. 
\item \emph{Barriers for polygonal objects}, in which $S_1, \dots, S_k$ as well as $O$ contain polygonal objects. 
\end{enumerate}
%The first problem considers points sets for separation, that in this case, $S_1, %\dots, S_k$ are point sets, and there are no obstacles in the workspace. This is the %same as the point set separation problem in the plane. 
%The second still takes $S_1, \dots, S_k$ as point set, but introduces polygonal obstacle sets $\mathcal O$ in the workspace. 
%The last one further considers $S_1, \dots, S_k$ as polygonal object sets.

\subsection{Computational Intractability}
Because the separation of even two sets of points in the plane, a special case of the first variant of our barrier forming problem, is computationally intractable \cite{demaine2005separating}, our first formulation is also NP-hard. From here, we can reduce from the first variant to other variants involving polygonal shapes by converting each point object to a sufficiently small polygon. Therefore, all three versions of the barrier forming problems studied in this paper are NP-hard. We omit the straightforward details.

\section{Structural Analysis}
Given that barrier forming problems studied in this work are NP-hard, a natural algorithmic choice for addressing the challenge is through exploring  mathematical programming.
To that end, a model must be built that selects from candidate barriers, which in turn requires the construction of a representative set of barrier candidates, a rather non-trivial task. 
The set of candidate barriers should satisfy two conflicting constraints: (1) it should contain a minimum set line segments that achieves the desired separation and (2) its size should not be too big that it will cripple the barrier selection process. 
Through careful structural analysis, we notice that the barriers to be considered can be limited to \emph{tangent} or \emph{bitangent} line segments. A tangent line segment, with respect to an object or an obstacle, is a line that passes through a vertex or an edge of the object/obstacle but does not intersect its interior. A bitangent is a line segment that is tangent to two objects and/or obstacles. 
This allows us to significantly reduce the number of candidates to be examined at the later selection stage.

\begin{theorem}\label{theorem:sin_tan}
For any $k$ sets of polygonal or point objects $S_1, \dots, S_k$ in the workspace $\mathcal W$, the set of line segments that are tangential to the objects and obstacles contains a set of minimum cardinality that separates $S_1, \dots, S_k$. 
\end{theorem}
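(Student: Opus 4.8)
The plan is to start from an arbitrary optimal solution $L^* = \{\ell_1^*, \dots, \ell_m^*\}$ and show that each segment can be replaced — one at a time, without increasing the count and without destroying the separation property — by a segment that is tangent to at least one object or obstacle, and then by a further perturbation argument push it to be tangent to two (or, if it is a boundary-anchored free end, to terminate on $\partial \mathcal{W}$). The key observation is that the separation property of $L^*$ is a purely topological condition: it says every path in the free space connecting an object of $S_i$ to an object of $S_j$ ($i \neq j$) meets $\bigcup_t \ell_t^*$. Small perturbations of a single segment $\ell_t^*$ that do not sweep the segment across any object, obstacle, or intersection point with another segment leave the combinatorial ``cut'' structure of the arrangement unchanged, hence preserve separation. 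So the argument is a continuous-motion / general-position sweep: translate and rotate $\ell_t^*$ (and extend or retract its endpoints) until it first becomes blocked, i.e., first touches an object or obstacle; at that moment it is tangent to something, since by hypothesis it never crossed into any interior.

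Concretely I would proceed in the following steps. First, fix a single segment $\ell = \ell_t^*$ and consider the family of free-space configurations reachable from $\ell$ by rigid motions plus endpoint sliding that keep $\ell$ in the open free space except possibly for endpoint contact. Second, argue that the separation property only depends on which pairs of ``cells'' of the line arrangement (restricted to the free space) are merged when $\ell$ is deleted, and that this is invariant under any motion of $\ell$ during which $\ell$ does not pass through an object, an obstacle vertex/edge, or change its incidence pattern with the other $\ell_s^*$; this is the technical heart and the step I expect to be the main obstacle, because one has to phrase ``same cut'' carefully when segments can cross each other and end in free space, and handle the degenerate ways a translation could momentarily fail to separate. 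Third, translate $\ell$ in some direction until it first hits an object or obstacle — such a direction exists unless $\ell$ is already tangent, in which case we are partly done — making $\ell$ tangent to one object; then rotate about the contact feature (a vertex, or sliding along an edge) until a second contact occurs; finally, extend both endpoints outward until each either reaches $\partial \mathcal{W}$ or is pinned by another obstacle. Lengthening an endpoint never hurts separation (it can only cut more paths), and it cannot cross an object interior because we stop at first contact. After these moves $\ell$ is a bitangent (or tangent with ends on the workspace boundary), and it still lies in the candidate set; crucially the number of segments is unchanged.

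The last step is to iterate: having normalized $\ell_t^*$, move to $\ell_{t+1}^*$, and observe that normalizing later segments does not un-normalize earlier ones because the tangency of an earlier segment is a closed condition that the perturbation of a different segment leaves intact (the earlier segment is not moved at all). Hence after $m$ rounds every segment of the modified solution is tangential to the objects and obstacles, it still separates $S_1,\dots,S_k$, and it has cardinality $m = |L^*|$, which is minimum. Therefore the tangential candidate set contains a minimum separating set, proving the theorem. I would remark that the only subtlety requiring care is general position: if $L^*$ has coincident endpoints or overlapping collinear segments, one first perturbs within the free space to reach a generic configuration with the same (or smaller) cardinality before running the sweep, which is harmless since a generic small perturbation of a separating family stays separating.
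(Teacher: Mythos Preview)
Your proposal has a genuine gap, and it also overreaches the statement.

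The invariance claim you yourself flag as ``the technical heart'' is exactly where the argument breaks. You correctly note that separation is preserved under a motion of $\ell$ that does not sweep across an object, an obstacle, \emph{or change its incidence pattern with another barrier segment $\ell_s^*$}. But in Step~3 you then translate (and later rotate) $\ell$ ``until it first hits an object or obstacle,'' silently dropping the third condition. Nothing prevents $\ell$ from first sweeping through an endpoint of, or losing a crossing with, some other $\ell_s^*$; at such an event the cell structure of the arrangement genuinely changes and a path between objects of different classes can open up. So the motion may reach a tangent position only after passing through configurations where separation has already been lost, and there is no argument that the final tangent position recovers it. Your general-position remark at the end does not help here: the problematic events are not degeneracies of $L^*$ but unavoidable incidences encountered \emph{during} the sweep.

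The paper sidesteps this with a different mechanism. It fixes one endpoint $O$ of $\ell$ (assumed WLOG to lie on an object or workspace boundary) and rotates $\ell$ about $O$ in \emph{both} directions until tangency, obtaining two candidates $\ell_1',\ell_2'$. It never claims either rotation preserves the arrangement. Instead it argues by contradiction that at least one of $\ell_1',\ell_2'$ still separates: if both fail, the two witnessing paths (one evading $\ell_1'$, one evading $\ell_2'$) must cross inside the object-free triangle swept out by the rotation, and splicing them at the crossing point produces a path between objects of different classes that already evades the original $\ell$ --- contradicting that $L^*$ separates. This ``rotate both ways, then swap paths'' idea is precisely what your sweep is missing.

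Finally, your stronger target --- pushing every segment all the way to a \emph{bitangent} --- is false in the generality of Theorem~\ref{theorem:sin_tan}: the paper exhibits a polygonal instance in which no minimum-cardinality barrier can consist solely of bitangents. So even with the gap repaired, your second perturbation step cannot go through for polygonal objects; single tangency is the most one can conclude here, and bitangency is proved separately (and only for point objects).
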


\begin{proof}

% We prove that there exists a set of lines with minimum cardinality that separates $S_1, \dots, S_k$, and consists of only lines tangent to object vertices. 
Consider a set of line segments $L^*$ with minimum cardinality that separates $S_1,\dots, S_k$. 
Without loss of generality, we assume all line segments in $L^*$ do not end in the free space, i.e., each line segment in $L^*$ ends
at either object boundaries or workspace boundaries.
If some line segment in a minimum barrier is not tangent to any object vertex, denoted it as $\ell=OA$ (shown in Fig.~\ref{fig:proof}), we show that it can be replaced by a line segment that is tangent to some object vertex. 
Fix one end of $\ell$, $O$ in this case, and rotate $\ell$ around $O$ in both clockwise and counterclockwise directions until it hits some object vertex and becomes tangential to the object.
Denote the two line segments resulting from clockwise rotation and counterclockwise rotation as $\ell_1'=OB$ and $\ell'_2=OC$, respectively. 

We show $\ell$ can be replaced with $\ell_1'$ or $\ell_2'$. 
If this is not the case,
since replacing $\ell$ with $\ell_1'$ cannot make the separation work, there must be some point $P_1$ between $AB$ that is path connected to some point in the other class without crossing any line segments in $L^*$ when $\ell$ is replaced with $\ell'_1$. Denote the point as $D_1$ and the path as $path_1$. 
The same analysis goes for $\ell'_2$, that if $\ell$ cannot be replaced by $\ell_2$ then there is some point $P_2$ in $AC$ and path $path_2$ that connects $P_2$ to some other point $D_2$ in a different class and crosses segment $\ell$ but not $\ell_2'$. 
Since there are no objects or obstacles inside triangle $OCB$, we can assume the parts of $path_1$ and $path_2$ inside triangle $OCB$ are straight lines.
So, $path_1$ and $path_2$ must cross each other at some point. 
Denote the cross point as $Q\in path_1 \cap path_2$. 
Then, $path_1 = path_{11} (from\ P_1\ to\ Q) + path_{12} (from\ Q\ to\ D_1)$ and $path_2 = path_{12} (from\ P_2\ to\ Q) + path_{22} (from\ Q\ to\ D_2)$. 
Path $p_{11} + p_{22}$ connects $P_1$ to $D_2$, and $p_{21} + p_{12}$ connects $P_2$ to $D_1$, one of which must not cross $\ell$. 
This leads to a contradiction to the fact that the original line set $L^*$ separates the $k$ classes of objects.

Therefore, each non-tangent line segment in $L^*$ can be replaced with a tangent line segment. 
It will eventually result in a set of tangent barriers with minimum cardinality that separates $S_1,\dots,S_k$.
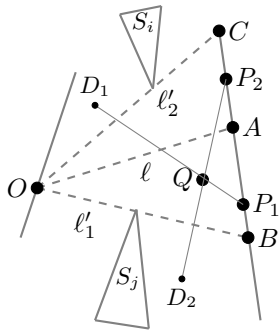
\begin{figure}[ht]
    \vspace{-2mm}
    \centering
\begin{tikzpicture}[scale = 1.1]
\draw[gray, thick] (0.1, 0) -- (0.7666, 2);
\draw[gray, thick] (3, -1) -- (2.5, 2.5);

\draw[gray, thick] (1, -1) -- (1.5, 0.34);
\draw[gray, thick] (1.65, -1.1) -- (1.5, 0.34);
\draw[gray, thick] (1.65, -1.1) -- (1, -1);

\draw[gray, thick] (1.3, 2.7) -- (1.7, 1.8);
\draw[gray, thick] (1.8, 2.8) -- (1.7, 1.8);
\draw[gray, thick] (1.8, 2.8) -- (1.3, 2.7);

\draw[gray, thick, dashed] (0.3, 0.6) -- (2.66666, 1.333);

\draw[gray, thick, dashed] (0.3, 0.6) -- (2.5, 2.5);
\draw[gray, thick, dashed] (0.3, 0.6) -- (2.85, 0);

\node[text width=1cm] at (2, 0.8) {$\ell$};
\node[text width=1cm] at (2.2, 1.63) {$\ell_2'$};
\node[text width=1cm] at (1.2, 0.15) {$\ell_1'$};

\filldraw[black] (0.3, 0.6) circle (2pt) node[anchor=east] {$O$};
\filldraw[black] (2.666, 1.333) circle (2pt) node[anchor=west] {$A$};
\filldraw[black] (2.5, 2.5) circle (2pt) node[anchor=west] {$C$};
\filldraw[black] (2.85, 0) circle (2pt) node[anchor=west] {$B$};

\filldraw[black] (2.79, 0.4) circle (2pt) node[anchor=west] {$P_1$};
\filldraw[black] (2.5833, 1.917) circle (2pt) node[anchor=west] {$P_2$};

\filldraw[black] (2.3, 0.7) circle (2pt) node[anchor=east] {$Q$};

\draw[gray] (2.79, 0.4) -- (1.0, 1.6);
\draw[gray] (2.5833, 1.917) -- (2.05, -0.5);
\filldraw[black] (1.0, 1.6) circle (1pt) node[anchor=south] {\small{$D_1$}};
\filldraw[black] (2.05, -0.5) circle (1pt) node[anchor=north] {\small $D_2$};

\node[text width=1cm] at (1.9, 2.6) {\small $S_i$};
\node[text width=1cm] at (1.7, -0.5) {\small $S_j$};

\end{tikzpicture}
    \caption{Rotating non-tangent barrier line segment $\ell$ in clockwise and counterclockwise directions around its endpoint $O$ until it becomes tangential to some objects.}
    \label{fig:proof}
    \vspace{-2mm}
\end{figure}
% Then, we show that there exists a set of lines with minimum cardinality that separates $S1$ and $S2$. Similarly, when a line is only tangent to one vertex
\end{proof}

Although we can limit the candidate barriers to line segments tangent to object vertices, there can
still be infinite number of candidates. 
One may consider using line segments that are bitangent to
object vertices, i.e. line segments crossing two object or obstacle vertices. If there are $n$ object/obstacle vertices, there can be at most $n^2$, i.e., a quadratic number of bitangents. 
Unfortunately, bitangent lines are insufficient to act as candidate barriers by themselves for polygonal objects. A counterexample in Fig.~\ref{fig:counter} shows that there is an instance where an optimal solution must contain line segments that are not bitangents. 
In this counterexample, we need separate the orange objects from the lime object. A minimum of three line segments are used, and it is not possible that all of them are bitangent, i.e.

\begin{proposition}
Bitangent line segments do not always contain optimal solution for the barrier forming problem for polygonal objects.
\end{proposition}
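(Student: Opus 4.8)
The plan is to exhibit an explicit configuration — the instance sketched in Fig.~\ref{fig:counter} — and argue that (a) it requires at least three line segments to separate the two colored sets, and (b) no valid separating set of three segments can consist entirely of bitangents. Since the claim is a negation of a universal statement, a single well-chosen counterexample suffices, so the proof is really a verification argument about one picture rather than a general structural argument.

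First I would pin down the instance precisely: place the lime object so that it is ``surrounded'' in a way that forces the separating barrier to wrap around it using three distinct segments, with the orange objects positioned so that each of the three ``gaps'' through which a path could escape must be plugged by a segment, and such that the natural segment plugging the middle gap is forced to terminate in free space or be tangent to only one object (so it cannot be a bitangent). Concretely, I would describe coordinates (or at least a qualitative but unambiguous layout) for the orange pieces, the lime piece, and any gray obstacles, then verify the lower bound of three by a short case analysis: one segment cannot do it (some pair of points in different classes stays connected going around either side), and two segments cannot do it either (argue that with two cuts the free space minus the cuts still has a connected component containing points of both classes, e.g. by a topological/winding argument around the lime object).

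Next I would argue the bitangency obstruction. By Theorem~\ref{theorem:sin_tan} we may assume each of the three segments is tangential to some object, and we may assume each segment ends on an object or workspace boundary. The key step is to show that in this instance, any three-segment separating solution must include at least one segment whose supporting line, when extended, has the property that it cannot simultaneously be tangent to two of the vertices present — for instance because the only vertices it could ``aim at'' lie on the wrong side, or because routing a bitangent through two vertices would force the segment to cross an object or obstacle interior, violating feasibility. I would make this quantitative using the chosen coordinates: enumerate all $\binom{n}{2}$ candidate bitangents for the (small) vertex set of the instance, observe that the subfamily of bitangents that are feasible (non-crossing) barriers is small and explicit, and check that no three of them jointly separate the classes — while a solution using one non-bitangent tangent segment does.

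The main obstacle I expect is the two-segment lower bound: ruling out \emph{all} pairs of line segments is the only place a naive case analysis could balloon, so I would want a clean invariant — such as: any single segment partitions the free space of $\mathcal W$ into exactly two regions, so two segments yield at most four regions, and I would show that in this instance four regions are not enough to isolate the classes because of how the orange pieces interleave around the lime piece (e.g. a parity/interleaving argument: there are three orange ``fingers'' arranged cyclically around lime, and two chords of a disk cannot separate all three from the center). Establishing that interleaving configuration rigorously, and confirming it is robust to the exact segment endpoints (tangency vs.\ termination in free space), is where I would spend the most care; the bitangency-failure step itself should then be a finite, mechanical check over the explicit vertex set.
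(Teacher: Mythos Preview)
Your proposal is correct and follows essentially the same approach as the paper: exhibit the explicit instance of Fig.~\ref{fig:counter} and argue that three segments are necessary while no triple of bitangents suffices. In fact you go considerably further than the paper, which simply asserts that the pictured configuration requires three segments and that not all can be bitangent, without the lower-bound case analysis or the finite enumeration of feasible bitangents you outline; your verification plan would make the paper's informal claim rigorous.
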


\begin{figure}[ht]
    \centering
    \vspace{-.2in}
    \includegraphics[width = .25\textwidth]{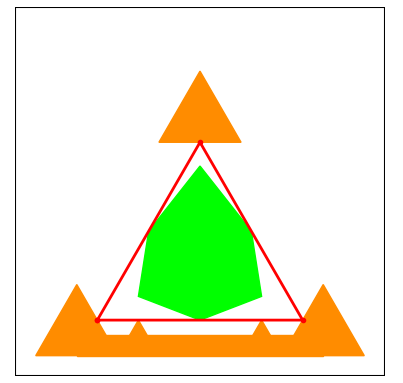}
    \vspace{0.0in}
    \caption{Counterexample that shows using only bitangent line segments cannot create the optimal solution}
    \label{fig:counter}
\end{figure}

Despite the caveat, for the first two formulations that deal with barrier forming for point sets, even with polygonal obstacles, bitangent line segments
always contain an optimal solution. More precisely, 
\begin{theorem}
For any $k$ sets of point objects $S_1, \dots, S_k$ in a workspace $\mathcal W$, there exists
a set of line segments with minimum cardinality that separates $S_1, \dots, S_k$, 
and only consists of bitangent line segments.
\end{theorem}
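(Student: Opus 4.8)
The plan is to bootstrap from Theorem~\ref{theorem:sin_tan}. Start with a separating set $L^*$ of minimum cardinality in which, by that theorem, every segment is tangent to some object or obstacle; for point objects being tangent to an object means simply passing through one of its points. It therefore suffices to take a segment $\ell \in L^*$ that is tangent to exactly one feature, say a point $v$, and show it can be replaced by a segment tangent to two features without destroying separation and without changing $|L^*|$. Iterating over the segments then yields an all-bitangent minimum separating set by finite induction (each replacement strictly decreases the number of non-bitangent segments and never disturbs the others).

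For the replacement, I would reuse the rotation idea of Theorem~\ref{theorem:sin_tan}, but now pivoting about the vertex $v$ rather than about a free endpoint. Rotate $\ell$ about $v$ clockwise (re-extending its endpoints so they always rest on a feature or on $\partial\mathcal W$) until the first instant the segment touches a second object, obstacle, or boundary vertex; call the result $\ell_1$, and define $\ell_2$ analogously by counterclockwise rotation. This is the one place the point nature of the objects is essential, and it is exactly where the polygonal case breaks down (cf.\ the counterexample of Fig.~\ref{fig:counter} and the preceding Proposition): a point has no incident edges, so $\ell$ can pivot about $v$ through an arbitrary angle without ever being forced through an object interior, which guarantees $\ell_1$ and $\ell_2$ exist. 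By the ``first-touch'' stopping rule, the two wedge-shaped regions swept between $\ell$ and $\ell_1$, and between $\ell$ and $\ell_2$, contain no object or obstacle in their interiors.

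Next I would show $(L^*\setminus\{\ell\})\cup\{\ell_1\}$ or $(L^*\setminus\{\ell\})\cup\{\ell_2\}$ is still a valid separator. Suppose neither is. As in Theorem~\ref{theorem:sin_tan}, the failure of $\ell_1$ gives a path $path_1$ between two points of different classes avoiding $(L^*\setminus\{\ell\})\cup\{\ell_1\}$; since $L^*$ separates, $path_1$ must cross $\ell$, hence enters the feature-free wedge between $\ell$ and $\ell_1$ near $v$, and its portion inside that wedge may be taken straight. The failure of $\ell_2$ similarly yields a path $path_2$ crossing $\ell$ but not $\ell_2$, straight inside the opposite wedge. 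The two straight portions share the common edge $\ell$ at $v$, so they must intersect at some point $Q$; splicing $path_1$ and $path_2$ at $Q$ produces a path joining two differently-classed points that crosses neither $\ell_1$ nor $\ell_2$ and can be routed off $\ell$, contradicting that $L^*$ separates the $k$ classes.

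The main obstacle is making this crossing/splicing step watertight: confirming the swept wedges really are free of all features so the subpaths can be straightened, and disposing of the degenerate configurations (the two subpaths meeting $\ell$ on the same side, $\ell_1=\ell_2$, endpoints of $\ell$ sliding along $\partial\mathcal W$ and terminating at a boundary vertex, $v$ itself on $\partial\mathcal W$, or instances so sparse that the only available ``second feature'' is a vertex of $\partial\mathcal W$), together with checking that whatever feature is first hit qualifies under the paper's definition of a bitangent. A secondary point, when polygonal obstacles are present, is the case where $\ell$ is tangent only to an obstacle vertex $v$ and to no point: there the admissible rotation range about $v$ is limited by the two edges incident to $v$, so one must argue that $\ell$ still meets a second feature within that range, or else can first be translated until it touches a point object, reducing to the main case.
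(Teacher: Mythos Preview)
Your overall strategy matches the paper's: start from the all-tangent optimum supplied by Theorem~\ref{theorem:sin_tan} and pivot each non-bitangent segment about its tangent vertex until a second feature is hit. The difference is in how the replacement is justified. You rotate in \emph{both} directions and rerun the two-path splicing argument of Theorem~\ref{theorem:sin_tan}. The paper instead rotates in a \emph{single} direction (clockwise, from $\ell=AB$ through the tangent point $O$ to $\ell'=A'B'$) and argues that this one replacement already works. The reason is that, because the $S_i$ consist of points, the two swept triangles $OAA'$ and $OBB'$ contain no object points at all: the endpoints $A,A',B,B'$ lie on obstacle or workspace boundary, and the rotation stopped at the first feature encountered. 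Hence any path that crosses $\ell$ but not $\ell'$ (and no other barrier) must enter one of these empty triangles through $\ell$ and, being unable to leave through $\ell'$ or through the obstacle/boundary arc $AA'$ (resp.\ $BB'$), must exit again through $\ell$; that sub-arc can be shortcut by a chord parallel to $\ell$, removing the crossing and contradicting separation by $L^*$. No second rotation, no splicing.

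Your route can likely be completed, but the splicing step is genuinely harder here than in Theorem~\ref{theorem:sin_tan}: there the pivot $O$ was an \emph{endpoint}, so each rotation swept a single triangle and the two witness paths were forced to intersect inside it. Here the pivot $v$ lies in the interior of $\ell$, so each rotation sweeps a bow-tie of two opposite wedges, and $path_1$ and $path_2$ can live in non-adjacent sectors where your claim ``they must intersect at some $Q$'' is not automatic. The paper's one-direction rerouting argument sidesteps this, and it is precisely where the point-object hypothesis earns its keep (emptiness of the swept region is exactly what fails in Fig.~\ref{fig:counter}). Most of the degenerate cases you list at the end dissolve once you switch to the single-rotation reroute; your final worry about $\ell$ being tangent only to an obstacle vertex is legitimate and is, in fact, glossed over in the paper as well.
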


\begin{proof}
From Theorem~\ref{theorem:sin_tan}, we can see using single tangent line segments is always enough
for an optimal solution. 
Now we turn an optimal solution, $L^*$, with only tangent line segments, into
a solution with only bitangent line segments while still maintaining the same number of barriers. 

For a tangent line segment $\ell=AB\in L^*$ with tangent point $O$ (shown in Fig.~\ref{fig:proof_bi}), and if $O$ is a point object, assume it is beneath $\ell$,
rotate $\ell$ clockwise around $O$ until it hits a point object or an obstacle vertex.
Denote the resulting line segment as $\ell'$, and replace $\ell$ with $\ell'$.
Since the objects are point objects, so $BB'$ and $AA'$ must belong to obstacles or workspace boundary, 
and thus there is no object point inside $OAA'$ or $OBB'$. 
Therefore, the replacement won't result in
any path connecting objects in different classes.
If this is not the case, then there will be some path connecting two object points in different classes that crosses $\ell$ but does not cross $\ell'$ or other barriers. 
Since the triangle areas $OAA'$ and $OBB'$ are empty, that path must enter region $OAA'$ or $OBB'$ and leave them from $\ell$. Then, that part of the path could be replaced with a straight line segment parallel to $\ell$ which prevents it from crossing $\ell$. This contradicts the assumption that $L^*$ prevents all connections between objects in different classes. % of objects.

Continuing the replacement until all line segments are bitangent will result in an optimal solution with only bitangent line segments.

\begin{figure}[ht]
    \centering
\begin{tikzpicture}
% \draw[gray, thick] (0, 0) -- (1, 2);
\draw[gray, thick] (2.5, -1) -- (1.75, 2);
\draw[gray, thick] (-1, -1) -- (-1, 1.3);

\draw[gray, thick] (0.7, -1.5) -- (1, -0.14);
\draw[gray, thick] (1.2, -1.6) -- (1, -0.14);

% \draw[gray, thick] (3, -1) -- (2.5, 2.5);

% \draw[gray, thick] (1, -1) -- (1.5, 0.34);
% \draw[gray, thick] (1.65, -1.1) -- (1.5, 0.34);
% 
% \draw[gray, thick, dashed] (0.3, 0.6) -- (2.66666, 1.333);
\draw[gray, thick, dashed] (-1., -0.5) -- (2., 1.);

\draw[gray, thick, dashed] (-1., 0.1) -- (2.35, -0.3);

% \draw[gray, thick, dashed] (0.3, 0.6) -- (2.5, 2.5);
% \draw[gray, thick, dashed] (0.3, 0.6) -- (2.85, 0);

\node[text width=1cm] at (1.4, 0.3) 
                        {$\ell$};
% \node[text] at (1.8, 1.63) 
                        % {$\ell_2'$};
\node[text width=1cm] at (1.8, -0.6) 
                        {$\ell'$};

\filldraw[black] (0.0, -0.1) circle (2pt) node[anchor=north] {$O$};
\filldraw[black] (2, 1.) circle (2pt) node[anchor=west] {$A$};
% \filldraw[black] (2.5, 2.5) circle (2pt) node[anchor=west] {$C$};
\filldraw[black] (-1, -0.5) circle (2pt) node[anchor=east] {$B$};

\filldraw[black] (2.35, -0.3) circle (2pt) node[anchor=west] {$A'$};
\filldraw[black] (-1, 0.1) circle (2pt) node[anchor=east] {$B'$};

% \filldraw[black] (2.7583, 0.6666) circle (2pt) node[anchor=west] {$P_1$};
% \filldraw[black] (2.5833, 1.917) circle (2pt) node[anchor=west] {$P_2$};

\end{tikzpicture}
    \caption{Rotating a single tangent barrier line segment $\ell$ around its tangent point $O$ clockwise until it becomes bitangent.}
    \label{fig:proof_bi}
\end{figure}
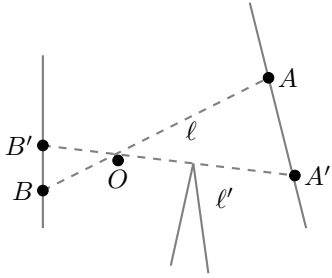
\end{proof}

For separating polygonal objects, although using bitangent line segments cannot guarantee an optimal solution that uses minimum number of line segments, they can still ensure that solutions limited to bitangents are at least $2$-optimal.
\begin{proposition}
For any $k$ sets of polygonal objects $S_1, \dots, S_k$ in the workspace $\mathcal W$, there exists a set of line segments with cardinality at most twice the minimum cardinality, that separates $S_1, \dots, S_k$, and only consists of line segments that are bitangent to object or obstacle vertices. 
\end{proposition}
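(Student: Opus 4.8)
The plan is to start from the tangent-only optimum guaranteed by Theorem~\ref{theorem:sin_tan} and show that each single-tangent segment can be traded for at most two bitangent segments without breaking the separation, which immediately yields a bitangent separating set of cardinality at most $2\cdot\mathrm{OPT}$.

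Concretely, fix an optimal separating set $L^*$ consisting only of single-tangent segments (Theorem~\ref{theorem:sin_tan}), again assuming without loss of generality that no segment ends in free space. Take any $\ell\in L^*$ that is not already a bitangent. Then $\ell$ touches exactly one object or obstacle, say $P$; and since a segment collinear with an edge of $P$ already passes through the two endpoints of that edge and can be extended along its supporting line until it meets a second object, I may assume $\ell$ touches $P$ at a single convex vertex $O$, so that the tangency cone of $P$ at $O$ has positive angular width. Mirroring the rotation idea of the previous two proofs, I would rotate $\ell$ about $O$ in the clockwise direction, staying inside the tangency cone of $P$ at $O$, and stop as soon as the rotating segment first becomes tangent to a second object/obstacle vertex or becomes collinear with an edge of $P$; call the result $\ell_1$. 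Doing the same counterclockwise gives $\ell_2$. Both $\ell_1$ and $\ell_2$ pass through (at least) two object/obstacle vertices, hence are bitangent, and I replace $\ell$ by the pair $\{\ell_1,\ell_2\}$.

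The core step is to show $(L^*\setminus\{\ell\})\cup\{\ell_1,\ell_2\}$ still separates $S_1,\dots,S_k$. By the stopping rule, the closed double wedge $R$ with apex $O$ swept as $\ell$ rotates from $\ell_2$ to $\ell_1$ has no object or obstacle in its interior, $\ell\subseteq R$, and $\partial R$ consists of $\ell_1$, $\ell_2$, and arcs lying on object/obstacle boundaries. If the replacement failed, there would be a path $\gamma$ joining two objects of different classes that crosses $\ell$ but avoids $\ell_1$, $\ell_2$, and every other segment of $L^*$. Each time $\gamma$ crosses $\ell$ it enters $R$; since it cannot cross $\ell_1$ or $\ell_2$, cannot enter an object, and cannot pass through the apex $O$ (which lies on the barriers), it must leave $R$ through $\ell$ again. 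Thus the crossings of $\gamma$ with $\ell$ occur in pairs, and each such excursion can be pushed out of the empty wedge by a local homotopy, producing a path between the two objects avoiding all of $L^*$ --- contradicting that $L^*$ separates. Applying this replacement to every non-bitangent segment of $L^*$ yields a separating set of bitangents of cardinality at most $2|L^*|$, i.e., at most twice the minimum.

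I expect the main obstacle to be the case where the tangency vertex $O$ lies in the interior of $\ell$ rather than at an endpoint: there $R$ is two wedges joined only at $O$ (not a single wedge), and it is precisely the presence of $P$ at $O$ --- which blocks a free rotation of one half on each side --- that makes a single rotated bitangent insufficient and forces us to keep both $\ell_1$ and $\ell_2$. Care is also needed to make the rotation/stopping procedure well defined when the rotating endpoint slides along object boundaries, to handle the degenerate edge-collinear and workspace-boundary cases so the replacements are genuinely bitangent, and to carry out the ``paired crossings'' deformation rigorously. Finally, the counterexample of Fig.~\ref{fig:counter} shows that a single rotated bitangent does not always suffice, so the factor of two produced by this argument cannot be removed.
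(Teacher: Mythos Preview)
Your proposal is correct and follows essentially the same route as the paper: start from the tangent-only optimum of Theorem~\ref{theorem:sin_tan}, rotate each non-bitangent segment about its tangent vertex in both directions until it becomes bitangent, and replace it by the resulting pair $\{\ell_1,\ell_2\}$. The paper's own argument is in fact terser than yours---it simply asserts that any path cut by $\ell$ is still cut by $\ell'_1$ or $\ell'_2$ and concludes---so your wedge argument and discussion of degenerate cases add rigor rather than diverge from the intended proof.
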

\begin{proof}

\begin{figure}[ht]
    \centering
\begin{tikzpicture}

\draw[gray, thick] (2.5, -1) -- (1.75, 2);
\draw[gray, thick] (-1.5, -1) -- (-1, 1.5);
\draw[gray, thick] (0, 0) -- (-0.3, 1);
\draw[gray, thick] (0, 0) -- (.2, 1.5);

\draw[gray, thick] (-0.8, -1.2) -- (-0.5, -0.25);
\draw[gray, thick] (-0.45, -1.25) -- (-0.5, -0.25);

\draw[gray, thick] (0.6, -0.9) -- (0.9, -0.22);
\draw[gray, thick] (1.1, -1.2) -- (0.9, -0.22);

\draw[gray, thick, dashed] (-1.45, -0.75) -- (2., 1.);
\draw[gray, thick, dashed] (-1.3, -0.0) -- (2.2, 0);

\draw[gray, thick, dashed] (-1.2, 0.32) -- (2.45, -0.6);

\node[text width=1cm] at (2.2, -0.2) 
                        {$\ell$};

\node[text width=1cm] at (1.8, -0.6) 
                        {$\ell'_1$};

\node[text width=1cm] at (1.8, 0.4) 
                        {$\ell'_2$};

\filldraw[black] (0.0, -0.0) circle (2pt) node[anchor=north] {$O$};

\end{tikzpicture}
    \caption{Rotating tangent barrier line segment $\ell$ both clockwise and counterclockwise around its tangent point $O$ until it becomes bitangent.}
    \label{fig:proof_bi_2opt}
\end{figure}
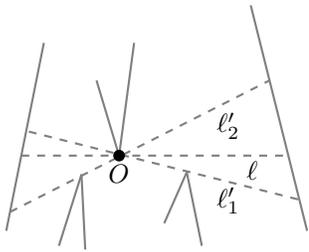

Starting from an optimal solution $L^*$ with only tangent line segments,
we will replace each tangent line segment with two bitangent line segments.

Rotate each non-bitangent line segment $\ell\in L^*$ around its tangent point $O$ in clockwise or counterclockwise directions until the line segment become bitangent, as illustrated in Fig.~\ref{fig:proof_bi_2opt}. 
Since any path connecting two objects in different classes and is cut by barrier $\ell$ will still be cut by $\ell'_1$ and $\ell'_2$.
The replacement can still guarantee the separation among the object groups.
After replacing all barriers, we can obtain a 2-OPT solution with the number of line segments twice the minimum.
\end{proof}
\label{sec:structure}

%\section{Hardness Analysis}
%\input{tex/hardness}

\section{Fast Computation of High-Quality Solutions}\label{sec:algorithm}
In this section, we will apply the structural results of the barrier forming problem and provide effective method to tackle it.
First, we start with a general method for obtaining the optimal solution among a set of candidate barriers.
Then, based on different ways to generate the candidate barriers, two approaches are given while one is based on 
using bitangent line segments and the other is based on sampling.

\subsection{Optimal Solution for Given Line Separator Candidates}
\label{sec:algo:ilp}
In the barrier forming problem, if the candidate barriers are available as a finite set, we can tackle the problem with integer programming (IP). 
To solve it, we first perform a decomposition of the workspace using the candidate barriers, which partitions the workspace into cells whose edges are part of some candidate barriers. 
Denote  $N$ as the number of candidate barrier line segments, and $M$ as the number of 
cells dissected using the candidate barriers. Fig.~\ref{fig:ilp_example} shows an example of dissecting the workspace into six cells with three candidate barriers.
Then, we can start to construct an IP model to solve the problem of minimizing the number of selected barriers. 
First, we use $\lceil\log k\rceil$ binary variables for each cell, 
resulting in $M\cdot\lceil \log k \rceil$ such variables $c_{11}, \dots, c_{M\lceil\log k\rceil}$. 
The value of $\overline{c_{i1}c_{i2}\dots c_{i\lceil \log k\rceil}}$ will represent the class of cell $i$. 
Thus, if there is an object in cell $i$, $\overline{c_{i1}c_{i2}\dots c_{i\lceil \log k\rceil}}$ should have a fixed value according to the class of the object.
A binary variable for each candidate line segment is used to indicate whether that line candidate is selected,
resulting in $N$ such variables: $\ell_1, \dots, \ell_N$. 

\begin{figure}
    \centering
    \begin{overpic}[width=0.3\textwidth]{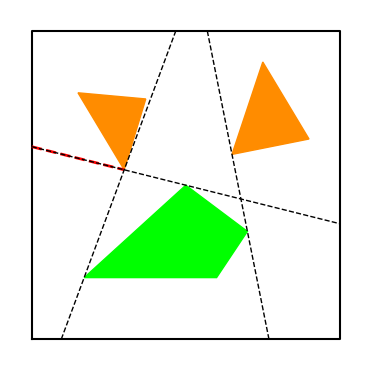}
    \put(0, 60) {$\ell_1$}
    \put(40, 95) {$\ell_2$}
    \put(55, 95) {$\ell_3$}
    
    \put(15, 80) {$c_1$}
    \put(50, 75) {$c_2$}
    \put(80, 80) {$c_3$}
    \put(15, 30) {$c_4$}
    \put(50, 15) {$c_5$}
    \put(80, 30) {$c_6$}
    \end{overpic}
    \caption{In this example, we aim to separate two groups of objects with the given candidate barriers. The workspace is decomposed into 6 cells by the 3 candidate barriers. As an example of constraint setup, the pair of adjacent cells $c_1$ and $c_4$ create a constraint of
    $\ell_1\geq c_1 \oplus c_4$, which is equivalent to $\ell_1\geq c_1 - c_4 \wedge \ell_1\geq c_4 -c_1$. (Since there is only $k=2$ classes of objects and $\log k =1$ here, the second index of $c_{i*}$ is eliminated.)} 
    \label{fig:ilp_example}
    \vspace{-0.2in}
\end{figure}

Between each pair of adjacent cells $i$ and $ j$ (let the candidate line segment between them correspond to $\ell_\sigma$), we have ($\oplus$ denotes ``exclusive or'')
\begin{equation}
\begin{split}
    &\ell_\sigma \geq c_{i\tau} \oplus c_{j\tau}  \Leftrightarrow \ell_\sigma \geq c_{i\tau} - c_{j\tau} \wedge \ell_\sigma\geq c_{j\tau} - c_{i\tau}, \\
    &\text{  for each adjacent cell $i$, $j$, and $1\leq \tau \leq \lceil \log k \rceil$},
\end{split}
\end{equation}
which means if two adjacent cells belong to different classes, then the line segment candidate between them must be selected. An example of the constraint setup is illustrated in Fig.~\ref{fig:ilp_example}.
Naturally, we have fixed $c_{i1},\dots,c_{i\lceil \log k\rceil}$ if ceil $i$ contains an object to be separated, and the value of $\overline{c_{i1}c_{i2}\dots c_{i\lceil \log k\rceil}}$ is set to be the same as its class index: $0\sim k-1$.
The objective is set to minimize the total number of line candidates selected, i.e., $\min \sum_{i=1}^{M} l_i$.

\subsection{Near-optimal Solution Using Bitangent Lines}
As the results in Section~\ref{sec:structure} show, using bitangent line segments 
can always provide optimal solution for the problem of barrier forming for point objects, 
and at least 2-OPT
solution for separating polygonal objects. 
Since the number of bitangent line segments is at 
most quadratic to the number of object or obstacle vertices, 
we can enumerate them, and apply IP method in Section~\ref{sec:algo:ilp} to find a solution. Fig.~\ref{fig:barrier_candidates} illustrates the candidate barriers constructed for point objects and polygonal objects.
It is worth noting that for enumerating bitangent barriers for point objects, the side of the point to the barrier also matters. For example, a pair of point objects will create $4$ bitangent barrier candidates as there are 4 different possible cases depending on the how the corresponding objects are placed with respect to the line. 

\begin{figure}[ht]
    \centering
    \includegraphics[trim=80 20 80 20,clip, width = .24\textwidth]{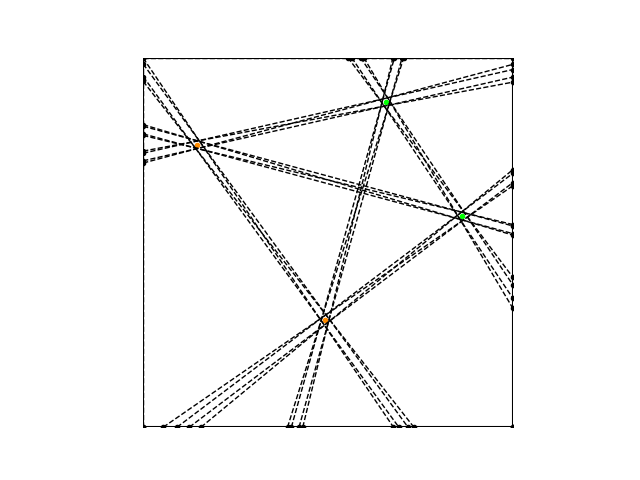}
    \hspace{-.1in}
    \includegraphics[trim=80 20 80 20,clip, width = .24\textwidth]{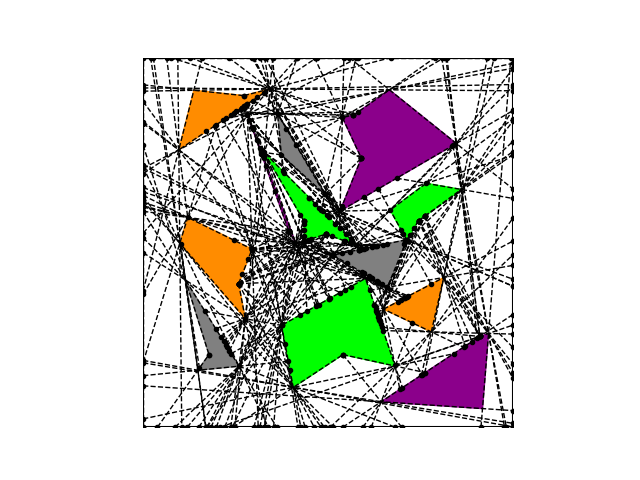}
    \caption{Illustration of bitangent barrier candidates. The left picture shows the bitangent
    candidates for $2$ point sets, each with $2$ points. In this case, a pair of points will create $4$ candidates. We note that we made the points non-zero-dimensional for visibility purposes. The right picture shows the bitangent candidates for $12$ polygonal objects in four sets.}
    \label{fig:barrier_candidates}
\end{figure}

\subsection{Sampling-based Resolution Complete Algorithm}
Although using bitangent line segments works well in most of the cases, it unfortunately cannot provide an optimal guarantee for the barriers formed when dealing with polygonal objects.
However, theorem~\ref{theorem:sin_tan} provides a good starting point for sampling line segments: we may limit candidate barrier sets to single tangents, i.e., we sample line segments passing through each object vertex in a radial manner. Hence, if we gradually increase the sampling resolution around each object and obstacle vertex and use the sampled line segments as candidate barriers, we can guarantee the asymptotic optimality of the resulting solution.

% In this section, we first describe a polynomial time approximation algorithm for a restricted version of Problem~\ref{p:2}. Then, we describe a general integer linear programming framework for solving Problems~\ref{p:1}-\ref{p:3},
% and local improvement techniques for enhancing solution quality.

% \subsection{Polynomial Time Approximation Algorithm}
% % Briefly mention the hardness results
% \input{tex/approx}

% \vspace{-2mm}
% \subsection{Integer Programming-Based Algorithmic Framework}
% \input{tex/ilp}

% \vspace{-1mm}
% \subsection{Local Enhancement of Coverage Quality}
% %\sw{``Enhancing Coverage Quality Locally" instead ?}
% \input{tex/localimprv}

\section{Experimental Evaluation}\label{sec:evaluation}
In this section, we describe our experimental study of the method proposed in the paper.
Four settings were used to account for the three variants of the barrier forming, the instances and solution examples of which are shown in Fig.~\ref{fig:exp}. 
The experiments were carried out on a Hexa-Core processor with 16 GiB memory, and Gurobi
\cite{optimization2019gurobi} was used as the Integer Programming solver.
In the polygonal object instances generation, random polygons were generated by computing traveling salesperson tour (TSP) tours of random point sets each consisting of $3$ to $6$ vertices, which we found to be effective in generating sensible looking polygons that are not necessarily convex.
When a problem setup contains obstacles, the number of obstacles in the environment is set to be the same as the number of objects for each object set.

\begin{figure*}[ht]
    \centering
    
    \includegraphics[trim=80 20 80 20,clip, width = .24\textwidth]{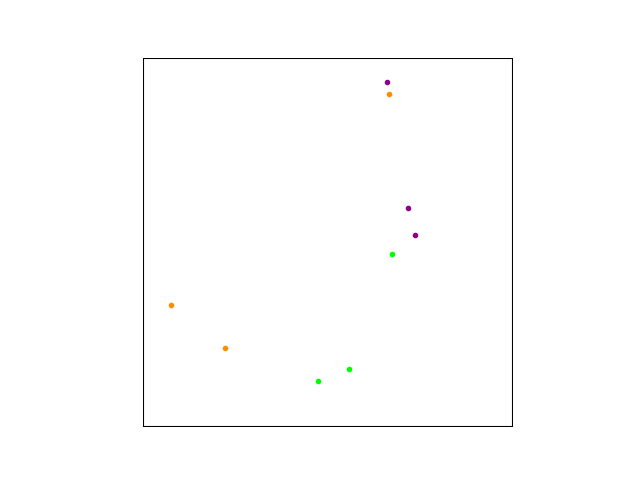}
    \hspace{-.1in}
    \includegraphics[trim=80 20 80 20,clip, width = .24\textwidth]{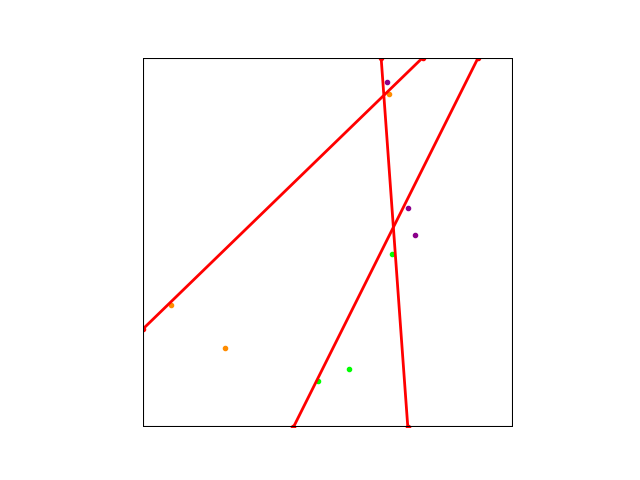}
    \includegraphics[trim=80 20 80 20,clip, width = .24\textwidth]{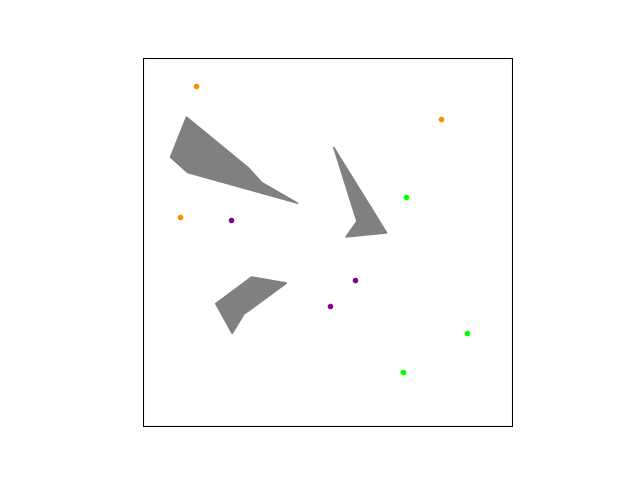}
    \hspace{-.1in}
    \includegraphics[trim=80 20 80 20,clip, width = .24\textwidth]{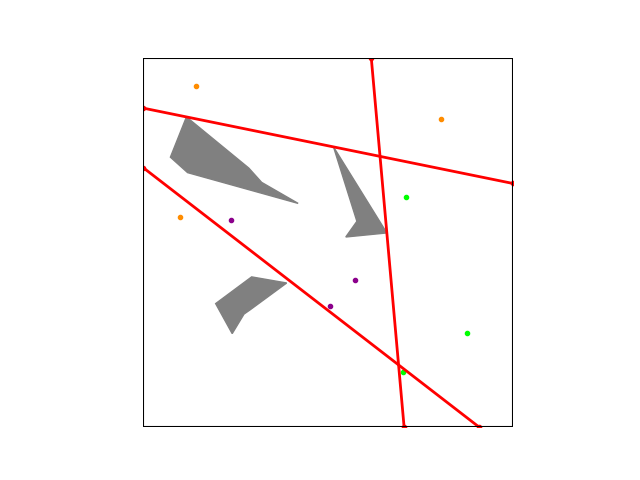}
    \includegraphics[trim=80 20 80 20,clip, width = .24\textwidth]{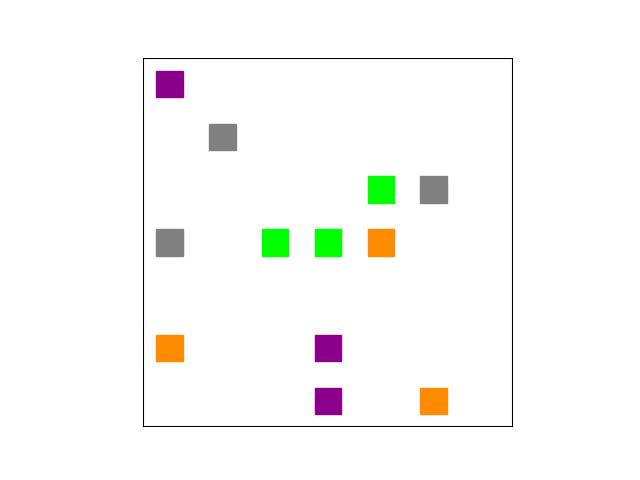}
    \hspace{-.1in}
    \includegraphics[trim=80 20 80 20,clip, width = .24\textwidth]{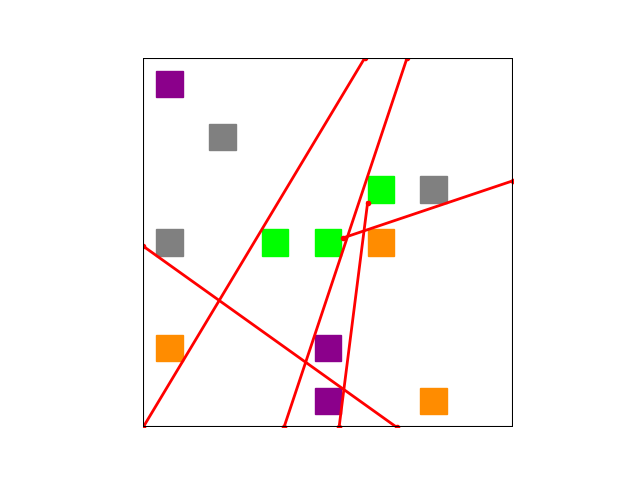}
    \includegraphics[trim=80 20 80 20,clip, width = .24\textwidth]{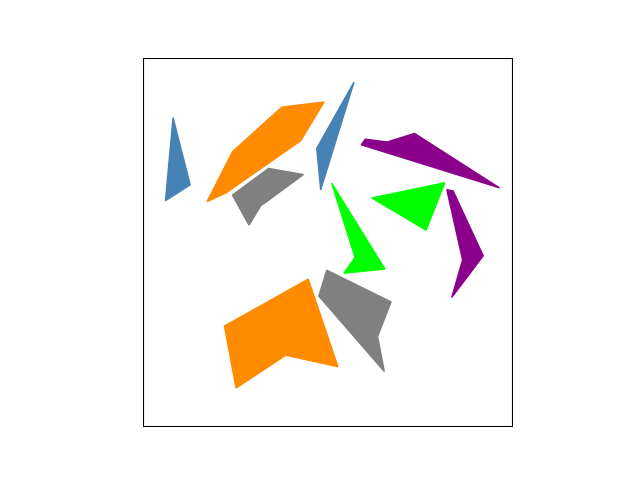}
    \hspace{-.1in}
    \begin{overpic}[trim=80 20 80 20,clip, width = .24\textwidth]{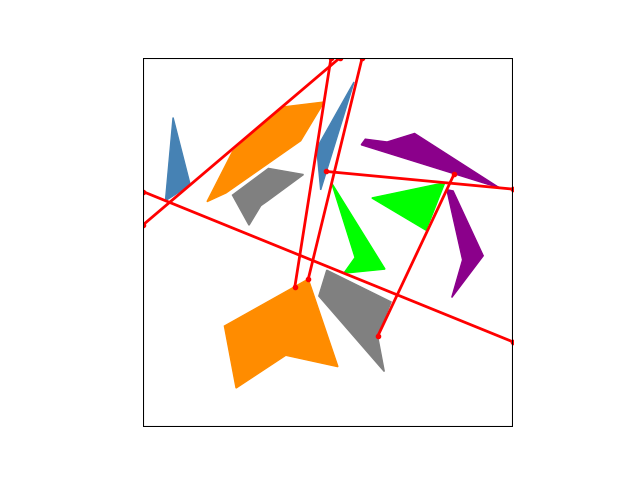}
    \put(-202, 98) {(a)}
    \put(-3, 98) {(b)}
    \put(-202, 0) {(c)}
    \put(-3, 0) {(d)}
    
    \end{overpic}
    \caption{Illustration of the four types of instances used in our  experimental evaluation. (a) Barrier forming for point sets. (b) Barrier forming to separate point sets from polygonal obstacles. (c) Barrier forming to separate uniform square-shaped objects among uniform square-shaped obstacles. (d) Barrier forming to separate random polygonal objects among random polygonal obstacles.}
    \label{fig:exp}
\end{figure*}

\subsection{Separating Sets of Points}
The first type of instances aims at forming barriers among randomly generated point sets, shown in Fig.~\ref{fig:exp}(a). The number of object sets to separate from each other range from $2$ to $4$, and the number of objects in each set range from $1$ to $6$. Each entry in the Table ~\ref{tab:expr_1} is the result of average computation times over 10 instances. From the result, we observe that the IP based method is fairly effective in separating two sets of objects with the presence of obstacles. The method scales to about $10$ point objects plus obstacles. The number of line segments in the optimal barrier are generally small, e.g., $3$-$10$.

\begin{table}[ht]
    \centering
    \begin{tabular}{|c|c|c|c|c|c|c|}\hline
        %  \diagbox{\#Set}{\#Objects}
        \#Sets &  1 & 2 & 3 & 4& 5& 6\\\hline
 2& 0.005 & 0.011 & 0.083 & 0.419 & 2.010 & 15.887 \\\hline
 3& 0.013 & 0.316 & 12.773 & 962.883 & - & - \\\hline
 4& 0.051 & 14.641 &  -& - &  -&- \\\hline
\end{tabular}
    \caption{Running time in seconds for Expr. 1 where all objects and obstacles are points (Fig.~\ref{fig:exp}(a)). ``-'' denotes the result cannot be computed in 1h on average (the same is true for other tables). 
    The column index means the number of objects in each set, and the row index means the number of object sets.
    The number of obstacles is set to be the same as the number of objects for each set. These also apply to the following tables.
    }
    \label{tab:expr_1}
\end{table}

 The second type of instances generates barriers for randomly generated 
 point sets with the existence of polygonal obstacles, shown in Fig.~\ref{fig:exp}(b). 
 The other specification is the same as Expr. 1, and the number of obstacles for each experiment is set to be the same as the number of objects for each set. 
 The resulting time cost, shown in Table~\ref{tab:expr_2}, is similar to Expr. 1 despite the existence of obstacles.
 Similarly, we observe decent performance when it comes to separating two sets of objects among obstacles. 
 The introduction of polygonal obstacles does not cause performance degradation.

\begin{table}[ht]
    \centering
    \begin{tabular}{|c|c|c|c|c|c|c|}\hline
        %  \diagbox{\#Set}{\#Objects}
        \#Set &  1 & 2 & 3 & 4& 5& 6\\\hline
2& 0.009 & 0.061 & 0.480 & 5.899 & 3.433 & 21.121\\\hline
 3& 0.044 & 3.287 & 71.346 & 320.955 & - & -\\\hline
 4& 0.249 & 13.801 & - & - & - & -\\\hline
    \end{tabular}
    \caption{Running time in seconds for Expr. 2 where objects to be separated are points and obstacles are randomly generated polygons (Fig.~\ref{fig:exp}(b)). 
    }
    \label{tab:expr_2}
    \vspace{-2mm}
\end{table}
 
\subsection{Separating Sets of Polygonal Shapes}
The third set of experiments uses randomly placed squares as obstacles and objects, shown in Fig.~\ref{fig:exp}(c). The squares are sampled from a $7\times7$ grid. Each square is half the scale of a grid cell and is positioned at the center of a grid cell. The running time for this case turns out to be the greatest among all $4$ experiments. This is due to the rectlinear nature of the instance, which creates many small cells that are difficult to process. 

\begin{table}[ht]
    \centering
    \begin{tabular}{|c|c|c|c|c|c|c|}\hline
        %  \diagbox{\#Set}{\#Objects}
         \#Set &  1 & 2 & 3 & 4& 5& 6\\\hline
 2 & 0.010 & 0.065 & 0.652 & 31.536 & 575.933 & 1259.653\\\hline
 3 & 0.065 & 10.608 & 337.050 & - & - & -\\\hline
 4 & 0.235 & 124.963 & - & - & - & -\\\hline
    \end{tabular}
    \caption{Running time in seconds for Expr. 3 with square-shaped objects and obstacles (Fig.~\ref{fig:exp}(c)). ``-'' denotes the result cannot be computed in 1h on average. 
    % The number of obstacles is set to be the same as the number of objects for each set.
    }
    \label{tab:expr_3}
    \vspace{-2mm}
\end{table}
 
The last type of instances uses random polygons with $3\sim 6$ vertices
as objects and obstacles, shown in Fig.~\ref{fig:exp}(d). 
Counter intuitively, these experiments turn out to have the least time cost among the $4$ experiments despite the most complex environment; we see that even for four different sets of objects where each set contains six objects, the problem can be solved very quickly. %This is due to the larger area of the object sets, which limits the choices of barrier line segments. 

In the end, the running time of the algorithm provided is more dependent on the number of cells and candidate barrier line segments. 
When objects and obstacles are more densely packed in the experiment, 
there will be less barrier candidates and cells due to collisions between objects and the candidate line segments.
While in a sparse environment or even with just point objects, there will be more barrier candidates and cells.
This explains the reduced time cost in a more complex environment from the sparse settings.

\begin{table}[ht]
    \centering
    \begin{tabular}{|c|c|c|c|c|c|c|}\hline
        %  \diagbox{\#Set}{\#Objects}
         \#Set&  1 & 2 & 3 & 4& 5& 6\\\hline
2& 0.006 & 0.031 & 0.080 & 0.143 & 0.157 & 0.156 \\\hline
3& 0.031 & 0.234 & 0.994 & 1.271 & 1.980 & 1.187 \\\hline
4& 0.103 & 0.518 & 3.000 & 6.050 & 9.692 & 17.996 \\\hline 
    \end{tabular}
    \caption{Running time in seconds for Expr. 4 where both the objects to be separated and the obstacles are randomly generated polygons (Fig.~\ref{fig:exp}(d)).
    % The number of obstacles is set to be the same as the number of objects for each set.
    }
    \label{tab:expr_4}
    \vspace{-2mm}
\end{table}

% \section{Future Work}\label{sec:future}

\section{Conclusion}\label{sec:conclusion}

In this work, we have formulated a barrier forming problem with three
different variants, separating point sets, separating point sets with polygonal obstacles, and separating polygonal object sets with polygonal obstacles, for which we seek a solution containing the minimum number of straight line segment separators. 
We provided structural analysis for this NP-hard problem and describe
a novel barrier candidate proposal and filtering algorithm, which is optimal for the first two variants and at least 2-OPT for the third.
The method can effectively work with tens of objects including obstacles, as is illustrated in our four sets of experiments conducted. 
Nevertheless, finding more scalable algorithm to this problem, as well as solving barrier forming for polygons optimally, are of great interest.
The exploration of dynamic barrier forming with moving barriers is also an important direction to pursue.

\bibliography{bib/bib}
\bibliographystyle{IEEEtran}

\end{document}